\newtheorem{thm}{Theorem}
\newtheorem{propo}{Proposition}
\newcolumntype{L}[1]{>{\raggedright\let\newline\\\arraybackslash\hspace{0pt}}m{#1}}
\newcommand{\R}{\ensuremath{\mathbb{R}}}
\newcommand*{\defeq}{\mathrel{\vcenter{\baselineskip0.5ex \lineskiplimit0pt
                     \hbox{\scriptsize.}\hbox{\scriptsize.}}}
                     =}
\ifcvprfinal\pagestyle{empty}\fi
\begin{document}

\title{Robust Camera Location Estimation by Convex Programming}

\author{Onur~\"{O}zye\c{s}il$^1$ and Amit Singer$^{1,2}$\\
$^1$Program in Applied and Computational Mathematics, Princeton University\\
$^2$Department of Mathematics, Princeton University\\
Princeton, NJ 08544-1000, USA\\
{\tt\small \{oozyesil,amits\}@math.princeton.edu}
}

\maketitle
\thispagestyle{empty} 
\pagestyle{empty} 

\begin{abstract}
$3$D structure recovery from a collection of $2$D images requires the estimation of the camera locations and orientations, i.e. the camera motion. For large, irregular collections of images, existing methods for the location estimation part, which can be formulated as the inverse problem of estimating $n$ locations $\mathbf{t}_1, \mathbf{t}_2, \ldots, \mathbf{t}_n$ in $\R^3$ from noisy measurements of a subset of the pairwise directions $\frac{\mathbf{t}_i - \mathbf{t}_j}{\|\mathbf{t}_i - \mathbf{t}_j\|}$, are sensitive to outliers in direction measurements. In this paper, we firstly provide a complete characterization of well-posed instances of the location estimation problem, by presenting its relation to the existing theory of parallel rigidity. For robust estimation of camera locations, we introduce a two-step approach, comprised of a pairwise direction estimation method robust to outliers in point correspondences between image pairs, and a convex program to maintain robustness to outlier directions. In the presence of partially corrupted measurements, we empirically demonstrate that our convex formulation can even recover the locations exactly. Lastly, we demonstrate the utility of our formulations through experiments on Internet photo collections.
\end{abstract}\vspace{-0.15in}
\section{Introduction}
Structure from motion (SfM) is the problem of recovering a $3$D (stationary) structure by estimating the camera motion corresponding to a collection of $2$D images of the same structure. Classically, SfM involves three steps: $(1)$ Estimation of point correspondences between pairs of images, and relative pose estimation of camera pairs based on corresponding points $(2)$ Estimation of camera motion, \ie global camera orientations and locations, from relative poses $(3)$ $3$D structure recovery based on the estimated motion by reprojection error minimization (\eg, using the bundle adjustment algorithm of~\cite{BundleAdjustment}). Although there exist accurate and efficient algorithms for the first and the third steps, existing methods for camera motion estimation, and specifically for the camera location estimation part, are usually sensitive to noise. The camera location estimation problem can be formulated as a specific case (for $d=3$) of the inverse problem of estimating $n$ locations $\mathbf{t}_1, \ldots, \mathbf{t}_n$ in $\R^d$ from a subset of (potentially noisy) measurements of the pairwise directions $\mathbf{\gamma}_{ij}$, given by 
\begin{equation}
\label{eq:GammaMats}
\mathbf{\gamma}_{ij} = \frac{\mathbf{t}_i - \mathbf{t}_j}{\|\mathbf{t}_i - \mathbf{t}_j\|} \\
\end{equation}
(see Figure~\ref{fig:LineEstInstance} for a noiseless instance of the problem). In terms of this formulation, misidentified point correspondences may manifest themselves (see \S\ref{sec:SubspaceEstim}) as direction measurements with large errors (\ie, outlier directions), and hence, may induce instability in location estimation.
\begin{figure}[!htbp]
\begin{center}\vspace{-0.08in}
   \includegraphics[trim=0cm 0cm 0cm 0cm, clip=true, width=0.9\linewidth]{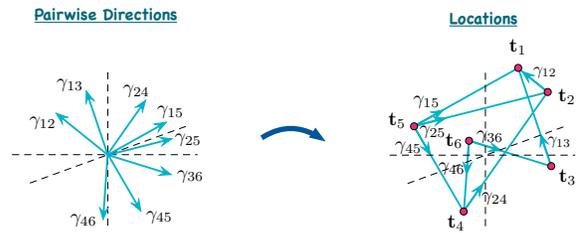}
\end{center}\vspace{-0.08in}
   \caption{A (noiseless) instance of the location estimation problem in $\R^3$, with $n=6$ locations and $m=8$ pairwise directions.\label{fig:LineEstInstance}}
\end{figure}
\newline\indent Existing methods for SfM can roughly be classified into two main categories; incremental approaches (\eg~\cite{SnavelyRome,VisualSfM,FurukawaPartial,HavlenaGroups,SnavelyData,SnavelySkeletal,ZhangIncremental}), that integrate images to the estimation process one by one (or in small groups) and global methods, that aim to estimate the camera motion (and sometimes also the $3$D structure) jointly for all images. Incremental methods are prone to accumulation of estimation errors at each step. On the other hand, for the global methods, since simultaneous estimation of motion and $3$D structure is computationally expensive, a usual procedure is to estimate motion and structure separately. Given an accurate motion estimate, a single instance of reprojection error minimization is usually enough to obtain high quality structure estimates.
\newline\indent Since orientation estimation is a relatively well-posed problem, with several efficient and stable existing methods (\eg~\cite{MicaAmitSfM,GovinduRot,GovinduLieAlg,HartleyRotations,MartinecRotations,CvXSfM}), it is customary to estimate the locations separately (based on the orientation estimates). The works of~\cite{MicaAmitSfM, BATL2, GovinduEarlyL2}, formulate the problem as finding a least squares solution to a linear system of equations derived from pairwise direction measurements (we refer to this method as the ``least squares'' (LS) solver). However, empirical observations (as in~\cite{CvXSfM}) have pointed out the instability of the LS solver, in the form of a tendency to produce spurious solutions clustering around a few locations. The multistage linear method of~\cite{MultiLinear} attempts to eliminate the clustering solutions by estimating the relative scales between cameras. The Lie algebraic averaging method of~\cite{GovinduLieAlg} is an efficient alternative, but may suffer from convergence to local minima. \cite{HartleySim} formulates a quasi-convex method (based on iterative optimization of a functional of the $\ell_\infty$ norm). However, since the $\ell_{\infty}$ norm is prone to outlier directions, this method usually fails to produce accurate estimates (see, \eg,~\cite{CvXSfM}). A relatively accurate method, closely related to our formulation, is studied in~\cite{TronVidalDist,TronVidalJournal}. Based on minimizing the $\ell_2$ norm of the error in direction measurements (linearized in $\mathbf{t}_i$'s), this method also employs constraints to eliminate clustering solutions (hence, we refer to this method as the ``constrained least squares'' (CLS) solver). However, in the presence of outlier directions, the accuracy of the CLS solver is degraded (see Figure~\ref{fig:SyntheticNRMSEsn200}). Another method closely related to our formulation, minimizing the $\ell_{\infty}$ norm of the error in direction measurements, is studied in~\cite{MoulonLinfty}. The accuracy of this method is affected by the sensitivity of the $\ell_{\infty}$ norm to outlier directions. In~\cite{JiangTriLin}, a global linear method, which uses triplets of images instead of pairwise directions, is studied. In the recent work~\cite{Snavely1D}, a preprocessing step (named 1DSfM, and designed to remove outlier directions), followed by a non-convex optimization method is introduced. Another recently introduced alternative is the ``semidefinite relaxation'' (SDR) solver of~\cite{CvXSfM}. Formulated as an abstract problem to estimate locations from pairwise ``lines'' (\ie, from measurements of $\pm\gamma_{ij}$, where the sign is unknown), this method aims to resolve the instability of the LS method by introducing extra non-convex constraints in the LS problem, and then relaxing them. However, semidefinite programming is computationally expensive for large data sets, and its accuracy is degraded in the presence of outlier lines (see Figure~\ref{fig:SyntheticNRMSEsn200}). 
\newline\indent In this paper, we characterize well-posed instances of the camera location estimation problem, by presenting its relation to the existing results of parallel rigidity theory. For robust estimation of camera locations, we introduce a two-step formulation: robust estimation of pairwise directions (in the presence of outliers in point correspondences), and a convex program for robust estimation of camera locations in the presence of measurements corrupted by large errors, \ie outlier directions. We provide empirical evaluation of our formulations using synthetic data, which demonstrate highly accurate location recovery performance compared to existing methods, and even exact location recovery in the presence of partially corrupted measurements (with sufficiently many noiseless directions). We also provide experimental results using real images, that present the accuracy and efficiency of our methods. 
\vspace{0.03in}\newline\noindent {\bf Notation:} We denote vectors in $\R^d$, $d\geq 2$, in boldface. For $\mathbf{x}\in\R^d$, $\|\mathbf{x}\|$ denotes its Euclidean norm. $S^d$ and $\mbox{SO}(d)$ denote the (Euclidean) sphere in $\R^{d+1}$ and the special orthogonal group of rotations acting on $\R^d$, respectively. We use the hat accent, to denote estimates of our variables, as in $\hat{X}$ is the estimate of $X$. We use star to denote solutions of optimization problems, as in $X^*$. Lastly, we use the letters $n$ and $m$ to denote the number of locations $|V_t|$ and the number of edges $|E_t|$ of graphs $G_t = (V_t,E_t)$ that encode the pairwise direction information.
\section{Location Estimation}
\label{sec:TransEstim}
The entire information of pairwise directions is represented using a measurement graph $G_t = (V_t, E_t)$, where the $i$'th node in $V_t = \{1,2,\ldots,n\}$ corresponds to the location $\mathbf{t}_i$ and each edge $(i,j)\in E_t$ is endowed with the direction $\mathbf{\gamma}_{ij}$. Provided with the set $\{\gamma_{ij}\}_{(i,j)\in E_t}$ of (noiseless) directions on $G_t = (V_t, E_t)$, we first study the problem of {\em unique realizability} of the locations. We will then introduce our robust formulation for location estimation from noisy pairwise directions.
\subsection{Parallel Rigidity}
\label{sec:ParRigidity}
The unique realizability of locations from (noiseless) pairwise directions was previously studied under the general title of {\em parallel rigidity theory} (see, \eg,~\cite{ErenNetwork,ErenNetwork2,KatzUnique,MaximalRigid,ServatiusWhiteleyCAD,WhiteleyMatroid,WhiteleyMatroidBook} and references therein). In the context of SfM, the implications of the parallel rigidity theory for the camera location estimation part were recognized in~\cite{CvXSfM}\footnote{We note that, although the pairwise measurements studied in~\cite{CvXSfM} are of the form $\pm\mathbf{\gamma}_{ij}$ (where, the sign is unavailable), the results of parallel rigidity theory for unique realizability remain the same when the signs are given.}. Here, we present a brief summary of fundamental results in parallel rigidity theory.
\newline\indent Provided with the noiseless pairwise directions $\{\mathbf{\gamma}_{ij}\}_{(i,j)\in E_t}\subseteq S^{d-1}$ (termed a ``formation''), we first consider the following fundamental questions: Can we uniquely realize $\{\mathbf{t}_i\}_{i\in V_t}$, of course, up to a global translation and scale (\ie can we obtain a set of points {\em congruent} to $\{\mathbf{t}_i\}_{i\in V_t}$)? Is unique realizability a generic property of the measurement graph $G_t$ (\ie is it independent of the particular realization of the points, assuming they are in generic position) and can it be decided efficiently? 
\begin{figure}[!htbp]
\begin{center}
   \includegraphics[trim=0cm 0cm 0cm 0cm, clip=true, width=0.95\linewidth]{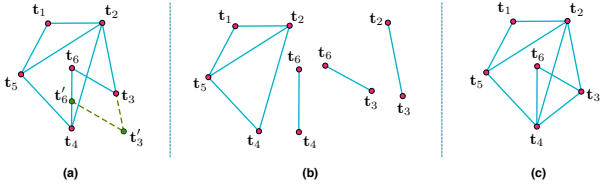}
\end{center}\vspace{-0.1in}
   \caption{{\bf(a)} A formation of $6$ locations on a connected graph, which is parallel rigid in $\R^3$ but {\bf not}  parallel rigid in $\R^2$. Non-uniqueness in $\R^2$ is demonstrated by two non-congruent location solutions $\{\mathbf{t}_1,\mathbf{t}_2,\mathbf{t}_3,\mathbf{t}_4,\mathbf{t}_5, \mathbf{t}_6\}$ and $\{\mathbf{t}_1,\mathbf{t}_2,\mathbf{t}_3',\mathbf{t}_4,\mathbf{t}_5,\mathbf{t}_6'\}$, each of which can be obtained from the other by an {\bf independent rescaling} of the solution for one of its maximally parallel rigid components in $\R^2$, {\bf(b)} Maximally parallel rigid components in $\R^2$, of the formation in {\rm(a)}, {\bf(c)} A parallel rigid formation (in $\R^2$ and $\R^3$) obtained from the formation in {\rm(a)} by adding the extra edge $(3,4)$ linking its maximally parallel rigid components\label{fig:ParallelRigidityEx}}
\end{figure}
\newline\indent Certifying unique realizability of locations is more complicated, \eg, compared to certifying uniqueness of camera orientations, which only requires (for arbitrary $d$) the connectivity of the measurement graph (see Figure~\ref{fig:ParallelRigidityEx}). On the other hand, parallel rigidity theory has a much simpler structure compared to the (classical) rigidity theory involving distance information (for a survey in rigidity theory, see~\cite{AspnesSurvey}). The identification of parallel rigid formations is addressed in~\cite{ErenNetwork,ErenNetwork2,WhiteleyMatroid,WhiteleyMatroidBook,KatzUnique} (also see the survey~\cite{JacksonJordanSurvey}), where it is shown that parallel rigidity in $\R^d$ ($d\geq2$) is a generic property of $G_t$ that admits a complete combinatorial characterization:
 \begin{thm}[Whiteley,1987]
\label{thm:LamanConds}
For a graph $G = (V,E)$, let $(d - 1)E$ denote the set consisting of $(d - 1)$ copies of each edge in $E$. Then,
$G$ is generically parallel rigid in $\R^d$ if and only if there exists a nonempty set $D\subseteq(d - 1)E$, with $|D| = d|V| - (d+1)$, such that for all subsets $D'$ of $D$,
\begin{equation}
\label{eq:LamanIneqs}
|D'| \leq d|V(D')| - (d+1) \ ,
\end{equation}
where $V(D')$ denotes the vertex set of the edges in $D'$.
\end{thm}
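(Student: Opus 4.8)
The plan is to reduce generic parallel rigidity to a rank condition on a single linear map, and then to identify the resulting (generic) matroid with the $(d,d{+}1)$-sparsity count matroid appearing in the statement. Because each constraint $\mathbf{t}_i-\mathbf{t}_j\parallel\gamma_{ij}$ forces $\mathbf{t}_i-\mathbf{t}_j$ into the line $\mathrm{span}(\gamma_{ij})$, it is equivalent to the $(d-1)$ linear equations $\langle\mathbf{v},\mathbf{t}_i-\mathbf{t}_j\rangle=0$ as $\mathbf{v}$ ranges over a basis of $\gamma_{ij}^\perp$. Stacking these over all edges yields a matrix $R$ with $(d-1)m$ rows, naturally indexed by the multiset $(d-1)E$, and $d|V|$ columns, whose kernel is the space of admissible location fields. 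The key point is that parallel rigidity is a purely linear phenomenon: the formation is parallel rigid exactly when $\ker R$ is no larger than the $(d{+}1)$-dimensional space of trivial fields spanned by the $d$ global translations and the radial scaling field $i\mapsto\mathbf{t}_i$, i.e. exactly when $\rank R = d|V|-(d+1)$. Since the entries of $R$ are linear in the direction data, every minor is a polynomial, so $\rank R$ attains its maximum on a Zariski-open dense set; this both shows that parallel rigidity is generic and identifies the maximal rank with the rank of a well-defined matroid on the ground set $(d-1)E$, whose independent sets are the generically linearly independent row subsets. The hypothesis $d\geq2$ guarantees both that this ground set is nonempty and that $(d,d{+}1)$ lies in the admissible sparsity range $\ell\leq 2k-1$.

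Next I would establish the necessity of~\eqref{eq:LamanIneqs}. Let $D'\subseteq(d-1)E$ index a linearly independent set of rows. Every such row is supported on the coordinate blocks of the vertices in $V(D')$, so the rows lie in $\R^{d|V(D')|}$; moreover each of them annihilates the restriction to $V(D')$ of the trivial fields, which for a generic (hence non-coincident) placement span a $(d{+}1)$-dimensional subspace there. Hence $|D'|=\rank\leq d|V(D')|-(d+1)$, which is exactly~\eqref{eq:LamanIneqs}. In particular the full row set has rank at most $d|V|-(d+1)$, and any maximal independent set $D$ satisfies $|D|\leq d|V|-(d+1)$ with all subsets obeying the count.

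The substance of the theorem is the converse: that the $(d,d{+}1)$-count is also sufficient for generic independence, i.e. the generic parallel rigidity matroid coincides with the count matroid. I would prove this by a Henneberg-type induction on the count-tight objects (those $D$ with $|D|=d|V|-(d+1)$ satisfying all the inequalities), using the genericity remark so that it suffices to exhibit a single configuration making the rows of $D$ independent. The base case is the $(d-1)$-fold single edge on two vertices, whose $d-1$ rows $\langle\mathbf{v}^{(k)},\cdot\rangle$ are independent because the $\mathbf{v}^{(k)}$ span $\gamma^\perp$; here $|D|=d-1=2d-(d+1)$. For the inductive step one removes a vertex via the two inverse moves. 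A $0$-extension, adding a vertex carrying $d$ incident ground-set elements, is immediate: the $d$ new rows are supported on the $d$ fresh coordinates with entries given by the incident perpendicular vectors, which for generic incident directions span $\R^d$, so the new block has rank $d$ and is independent of all old rows. The delicate move is the $1$-extension (edge split), where after re-routing a deleted element through the new vertex one must show that a generic choice of the split directions keeps the enlarged row set independent. This is the usual crux of Laman-type arguments and I expect it to be the main obstacle, to be handled by selecting an explicit limiting configuration that makes the relevant maximal minor manifestly nonzero and then invoking Zariski-openness.

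Combining the three ingredients finishes the proof: generic parallel rigidity is equivalent to $\rank R=d|V|-(d+1)$; by necessity the rank never exceeds this and is certified by count-sparse sets; and by sufficiency a count-tight $D$ of size $d|V|-(d+1)$ is generically independent and hence realizes the full rank. Therefore $G$ is generically parallel rigid in $\R^d$ if and only if $(d-1)E$ contains a set $D$ with $|D|=d|V|-(d+1)$ satisfying~\eqref{eq:LamanIneqs} for every $D'\subseteq D$, which is precisely the assertion of the theorem.
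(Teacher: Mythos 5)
This statement is quoted by the paper as a known result (Whiteley, 1987) and is not proved there, so there is no in-paper argument to compare against; your attempt has to stand on its own. Its overall architecture is sound and standard: encoding each direction constraint as $d-1$ linear functionals $\langle\mathbf{v},\mathbf{t}_i-\mathbf{t}_j\rangle=0$, identifying generic parallel rigidity with $\rank R=d|V|-(d+1)$ via the $(d+1)$-dimensional space of trivial fields (translations plus scaling), invoking Zariski-openness for genericity, and the necessity argument for~\eqref{eq:LamanIneqs} (independent rows supported on $V(D')$ must annihilate the restriction of the trivial fields, which is $(d+1)$-dimensional since $|V(D')|\geq 2$ and the placement is generic) are all correct.

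The genuine gap is the sufficiency direction, which is the entire content of the theorem, and your proposal does not actually prove it. Two distinct pieces are missing. First, the combinatorial half of the Henneberg induction: you must show that every $(d,d+1)$-tight subset $D\subseteq(d-1)E$ can be reduced to the base case by inverse $0$- and $1$-extensions, i.e.\ that a tight multigraph (with edge multiplicities bounded by $d-1$) always has a vertex of degree $d$ or $d+1$ in $D$ and, in the latter case, admits an edge split that preserves tightness and the multiplicity bound. This reduction lemma is nontrivial for general $(k,\ell)$-counts and is nowhere argued. Second, the geometric half: you explicitly defer the $1$-extension independence lemma as an expected ``main obstacle,'' so the step that carries all the difficulty is asserted rather than established; a limiting-configuration argument can work here, but it has to be exhibited, since this is exactly where Laman-type proofs can fail (the count being sufficient is a theorem, not a formality). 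For reference, Whiteley's own route avoids Henneberg induction altogether: the count $|D'|\leq d|V(D')|-(d+1)$ is the Dilworth truncation of the $d$-fold union of the graphic matroid (count $d(|V(D')|-1)$), and the identification of the generic parallel rigidity matroid with that truncation is carried out with matroid-union machinery (Nash-Williams/Tutte and Mason's theorem). Either route is legitimate, but as written your argument proves only necessity and the easy reduction to a rank statement.
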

\indent The conditions of Theorem~\ref{thm:LamanConds} can be used to design efficient algorithms (\eg, adaptations of the pebble game algorithm~\cite{PebbleGame}, with a time complexity of $\mathcal{O}(n^2)$) for testing parallel rigidity. Also,~\cite{CvXSfM} provides a randomized spectral test (having a time complexity of $\mathcal{O}(m)$) for testing parallel rigidity. Moreover, unique realizability turns out to be equivalent to parallel rigidity, for arbitrary $d$ (see~\cite{ErenNetwork,JacksonJordanSurvey,KatzUnique,CvXSfM,WhiteleyMatroidBook}). 
\newline\indent For a formation that is not parallel rigid, the algorithms in~\cite{KatzUnique,MaximalRigid} can be used to decompose the graph into maximally parallel rigid components (\ie, to obtain maximal subgraphs of $G_t$ that can be uniquely realized).
\newline \indent The results of parallel rigidity theory are valid for {\em noiseless} directions. However, when provided with {\em noisy} directions (\eg, computed from real images), instead of uniqueness of the solution of a specific camera location estimation algorithm, we consider the following question: Is there sufficient information for the location estimation problem to be {\em well-posed} (in the sense that, if direction measurement error is small enough, then locations can be estimated stably)? For formations which are not parallel rigid, instability results from independent scaling and translation of maximally rigid components. Hence, we consider problem instances on parallel rigid measurement graphs to be {\em well-posed}. As a result, given a (noisy) formation $\{\mathbf{\gamma}_{ij}\}_{(i,j)\in E_t}$ on $G_t = (V_t,E_t)$, we firstly check for parallel rigidity of $G_t$, then, if the formation is not parallel rigid, we extract its maximally parallel rigid components (using the algorithm in~\cite{MaximalRigid}) and estimate the locations for the largest maximally parallel rigid component of $G_t$.
\subsection{Robust Location Estimation}
\label{sec:LocationEstim}
This section introduces our main formulation for robust location estimation. Suppose we are given a set of pairwise direction measurements $\{\mathbf{\gamma}_{ij}\}_{(i,j)\in E_t}\subseteq S^{d-1}$, \ie, for each $(i,j)\in E_t$, $\mathbf{\gamma}_{ij}$ satisfies
\begin{equation}
\label{eq:NoisyDirection}
\mathbf{\gamma}_{ij} = \frac{\mathbf{t}_i-\mathbf{t}_j }{\|\mathbf{t}_i-\mathbf{t}_j\|} + \epsilon_{ij}^{\gamma}
\end{equation}
where, $\epsilon_{ij}^{\gamma}$ denotes the direction error. Our objective is to estimate the locations $\{\mathbf{t}_i\}_{i\in V_t}$ (from the directions $\{\mathbf{\gamma}_{ij}\}_{(i,j) \in E_t}$) by maintaining  robustness to outlier direction measurements (\ie, $\mathbf{\gamma}_{ij}$'s with large $\epsilon_{ij}^{\gamma}$'s) in a computationally efficient manner. In this respect, we first rewrite (\ref{eq:NoisyDirection}) as
\begin{align}
\label{eq:NoisyDirectionLinT1}
\mathbf{t}_i-\mathbf{t}_j  &= \|\mathbf{t}_i-\mathbf{t}_j\| \mathbf{\gamma}_{ij}+ \epsilon_{ij}^{\mathbf{t}}\\
\label{eq:NoisyDirectionLinT2}
&= d_{ij}\mathbf{\gamma}_{ij}+ \epsilon_{ij}^{\mathbf{t}}\\
\label{eq:NoisyDirectionLinT3}
\iff\epsilon_{ij}^{\mathbf{t}} &= \mathbf{t}_i-\mathbf{t}_j - d_{ij}\mathbf{\gamma}_{ij}
\end{align}
where, $\epsilon_{ij}^{\mathbf{t}}$ denotes the displacement error, and we define $d_{ij}\defeq\|\mathbf{t}_i-\mathbf{t}_j\|$ to rewrite $\epsilon_{ij}^{\mathbf{t}}$ linearly in $\mathbf{t}_i$, $\mathbf{t}_j$ and $d_{ij}$. Observe that, large errors in directions (\ie, large $\epsilon_{ij}^{\gamma}$'s) induce large displacement errors $\epsilon_{ij}^{\mathbf{t}}$'s. As a result, we can employ displacement error minimization as a substitute for direction error minimization for location estimation. 
\newline\indent Hence, to maintain robustness to large $\epsilon_{ij}^{\mathbf{t}}$'s in~(\ref{eq:NoisyDirectionLinT3}), we choose to minimize the sum of {\em unsquared} norms of $\epsilon_{ij}^{\mathbf{t}}$'s. Also, for computational efficiency, we drop the intrinsic non-convex constraints $d_{ij} = \|\mathbf{t}_i-\mathbf{t}_j\|$ to obtain the convex ``least unsquared deviations'' (LUD) formulation
\begin{equation}
\label{eq:LUD}
\begin{aligned}
&\hspace{-0.25in}\underset{{\scriptstyle \begin{subarray}{c}
\{\mathbf{t}_i\}_{i\in V_t}\subseteq \R^d \\ 
\{d_{ij}\}_{(i,j)\in E_t}\end{subarray}}}{\text{minimize}}
& & \hspace{-0.1in} \sum_{(i,j)\in E_{t}}  \left\| \mathbf{t}_i-\mathbf{t}_j - d_{ij}\mathbf{\gamma}_{ij} \right\| \\
& \hspace{-0.18in} \text{subject to}
& & \sum_{i\in V_t} \mathbf{t}_i = \mathbf{0} \ ; \ d_{ij}\geq c, \ \forall (i,j) \in E_t 
\end{aligned}
\end{equation}
where the constraints $\sum_i \mathbf{t}_i = \mathbf{0}$ and $d_{ij}\geq c$ remove the translational and the scale ambiguities of the solution, respectively ({\em wlog} we take $c = 1$)\footnote{We note that the least squares version of~(\ref{eq:LUD}) (\ie, the program with the cost function $\sum_{(i,j)\in E_{t}}  \left\| \mathbf{t}_i-\mathbf{t}_j - d_{ij}\mathbf{\gamma}_{ij} \right\|^2$, and the same constraints as in~(\ref{eq:LUD})), which we name the ``constrained least squares'' (CLS) method, was previously studied in~\cite{TronVidalDist,TronVidalJournal}. However, as we experimentally demonstrate in \S\ref{sec:Experims}, the CLS formulation fails to maintain robustness to outliers. Also, the $\ell_{\infty}$ version of~(\ref{eq:LUD}), using the same constraints, was studied in~\cite{MoulonLinfty}.}. The constraints $d_{ij}\geq c$ are introduced to prevent trivial solutions of the form $d_{ij}^*\equiv0$, $\mathbf{t}_i^*\equiv\mathbf{0}$, as well as solutions clustered around a few locations. 
\newline\indent For well-posed instances of the location estimation problem (\ie, for parallel rigid $G_t$), and in the presence of noiseless direction measurements (\ie, $\epsilon_{ij}^{\gamma}\equiv\mathbf{0}$ in (\ref{eq:NoisyDirection})), we expect the LUD and CLS solvers to recover the locations $\mathbf{t}_i$ exactly. 
\begin{propo}[Exact Recovery in the Noiseless Case]
\label{propo:ExactRecovery}
Assume that the noiseless formation $\{\mathbf{\gamma}_{ij}\}_{(i,j)\in E_t}$, corresponding to the locations $\{\mathbf{t}_i\}_{i\in V_t}$ (in general position), is parallel rigid. Then, the LUD~(\ref{eq:LUD}) and CLS solvers recover the locations exactly, in the sense that any solution is congruent to $\{\mathbf{t}_i\}_{i\in V_t}$.
\end{propo}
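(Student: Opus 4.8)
The plan is to show that the minimum value of both programs is exactly zero, that this value is attained by (a rescaled copy of) the true formation, and that every feasible point of value zero is congruent to $\{\mathbf{t}_i\}_{i\in V_t}$.

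First I would exhibit a feasible point of objective value zero. Since the formation is noiseless, $\mathbf{t}_i-\mathbf{t}_j = \|\mathbf{t}_i-\mathbf{t}_j\|\,\gamma_{ij}$ for every $(i,j)\in E_t$, so setting $d_{ij}=\|\mathbf{t}_i-\mathbf{t}_j\|$ makes each summand $\|\mathbf{t}_i-\mathbf{t}_j-d_{ij}\gamma_{ij}\|$ vanish. The objective of both LUD~(\ref{eq:LUD}) and CLS is a sum of (squared) norms, hence nonnegative, so this already shows the value zero is a lower bound that can be met. The only obstruction to using the true locations directly is the constraints $\sum_i\mathbf{t}_i=\mathbf{0}$ and $d_{ij}\geq 1$, and both are removed by the translation and scale freedom. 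Translating the $\mathbf{t}_i$ to their centroid and rescaling all of them by any $\lambda\geq 1/\min_{(i,j)\in E_t}\|\mathbf{t}_i-\mathbf{t}_j\|$ (a finite, positive quantity, because general position forces $\mathbf{t}_i\neq\mathbf{t}_j$ on every edge) produces a feasible point at which every summand is still zero. Hence the optimal value of each program equals $0$.

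Second, I would read off the structure of an arbitrary minimizer. Because the optimal value is $0$ and each summand is a (squared) norm, any solution $(\{\mathbf{t}_i^*\},\{d_{ij}^*\})$ must satisfy $\mathbf{t}_i^*-\mathbf{t}_j^*=d_{ij}^*\gamma_{ij}$ for every edge, with $d_{ij}^*\geq 1>0$. In particular the recovered edge vectors point along the prescribed directions with the correct sign, so $\{\mathbf{t}_i^*\}$ is a genuine realization of the same formation $\{\gamma_{ij}\}_{(i,j)\in E_t}$ and not of some sign-flipped variant.

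Finally, I would invoke parallel rigidity. Since $G_t$ is parallel rigid and the points are generic, the formation is uniquely realizable, and by the equivalence of unique realizability and parallel rigidity recalled above, any two exact realizations of the same directions are congruent up to a global translation and scale. Applying this to $\{\mathbf{t}_i^*\}$ and the true $\{\mathbf{t}_i\}$ gives the claim for LUD, and the identical argument with squared norms gives it for CLS. The one delicate point is the second step: one must confirm that exact vanishing of the objective together with $d_{ij}^*\geq 1$ yields a realization with the correct orientations, so that the uniqueness statement phrased for directions (rather than unsigned lines) applies. Everything else follows directly from nonnegativity of the objective and the cited rigidity equivalence.
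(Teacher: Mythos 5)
Your proposal is correct and follows essentially the same route as the paper's proof: exhibit a zero-cost feasible point by rescaling and recentering the true configuration, observe that any optimizer must therefore make every summand vanish so that $\mathbf{t}_i^*-\mathbf{t}_j^*=d_{ij}^*\gamma_{ij}$ with $d_{ij}^*\geq 1>0$, and conclude congruence from parallel rigidity. Your explicit remark that $d_{ij}^*\geq 1$ forces the correct sign of each recovered direction is a point the paper handles only implicitly, but it is the same argument.
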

\begin{proof}
{\em Wlog}, we assume $\min_{(i,j)\in E_t} \|\mathbf{t}_i-\mathbf{t}_j\| = 1$ and $\sum_i \mathbf{t}_i = \mathbf{0}$. Then, $\{\mathbf{t}_i\}_{i\in V_t}$ together with $d_{ij} = \|\mathbf{t}_i - \mathbf{t}_j\|$, $(i,j)\in E_t$, constitute an optimal solution for the LUD~(\ref{eq:LUD}) and CLS problems, with zero cost value. Let, $\{\mathbf{t}_i'\}_{i\in V_t}$ and $\{d_{ij}'\}_{(i,j)\in E_t}$ be another solution of the LUD~(\ref{eq:LUD}) and CLS problems, which must also have zero cost value. Then, for each $(i,j)\in E_t$, we get
\begin{align}
\nonumber
&\|\mathbf{t}_i' - \mathbf{t}_j'-d_{ij}'\mathbf{\gamma}_{ij}\| = 0 \iff  \mathbf{t}_i' - \mathbf{t}_j' = d_{ij}'\mathbf{\gamma}_{ij}\\
&\implies  \frac{\mathbf{t}_i' - \mathbf{t}_j'}{\|\mathbf{t}_i' - \mathbf{t}_j'\|} = \frac{\mathbf{t}_i - \mathbf{t}_j }{\|\mathbf{t}_i - \mathbf{t}_j\|}
\end{align}
\ie, $\{\mathbf{t}_i'\}_{i\in V_t}$ induces a formation on $G_t$, which is parallel to the formation corresponding to $\{\mathbf{t}_i\}_{i\in V_t}$ on $G_t$. However, since $G_t$ is parallel rigid, $\{\mathbf{t}_i'\}_{i\in V_t}$ has to be congruent to $\{\mathbf{t}_i\}_{i\in V_t}$ (in fact, $\mathbf{t}_i' = \alpha\mathbf{t}_i$, for $\alpha \geq 1$, by the feasibility of $\{\mathbf{t}_i',d_{ij}'\}$).
\qquad\end{proof}
\subsection{Iteratively Reweighted Least Squares (IRLS)}
\label{sec:IRLS}
In this section we formulate an iteratively reweighted least squares (IRLS) solver (see, \eg,~\cite{IRLS1,IRLS2}) for the LUD problem~(\ref{eq:LUD}). The main idea of IRLS is to iteratively solve (successive smooth regularizations of) the convex problem by using quadratic programing (QP) approximations. A pseudo code version is provided in Algorithm~\ref{alg:IRLS} (where, we consider a single smooth regularization). At the $r$'th iteration, more emphasis is given to the directions that are better approximated by the estimates $\hat{\mathbf{t}}_i^r$'s and $\hat{d}_{ij}^r$'s. Also, the regularization parameter $\delta$ ensures that no single direction can attain unbounded influence. The iterations are repeated until a convergent behavior in the variables, and in the cost value of the problem is observed. We refer the reader to~\cite{IRLS3} for a proof of convergence of the IRLS solver (where, a sequence of smooth regularizations, with $\delta\searrow0$, is assumed\footnote{Although we use a single smooth approximation by fixing $\delta\ll1$ for simplicity, we always obtained a convergent behavior in our experiments.}).
\begin{algorithm}[!htbp]
\small
\begin{algorithmic}
\STATE 
  {\bf Initialize:} $w_{ij}^0 = 1, \forall (i,j)\in E_t$  \vspace{0.05in}
\STATE  {\bf for } $r = 0,1,\ldots$ {\bf do} \vspace{0.05in}
\STATE \hspace{0.1in}$\left\lfloor 
\begin{aligned} &{\textstyle \mbox{\textbullet \ \ Compute } \{\hat{\mathbf{t}}_i^{r+1}\},\{\hat{d}_{ij}^{r+1}\} \mbox{ by solving the QP:}}\\[0.05in]
&\hspace{0.2in}{\displaystyle \underset{\left\{\begin{subarray}{c}
\sum\mathbf{t}_i = \mathbf{0},\\ 
d_{ij} \geq1\end{subarray}\right\}}{\text{minimize}} \sum_{(i,j)\in E_{t}}  w_{ij}^r\left\| \mathbf{t}_i-\mathbf{t}_j - d_{ij}\mathbf{\gamma}_{ij} \right\|^2}\\[0.02in]
&{\textstyle \mbox{\textbullet \ \ }w_{ij}^{r+1} \leftarrow \left( \left\| \hat{\mathbf{t}}_i^{r+1}-\hat{\mathbf{t}}_j^{r+1} - \hat{d}_{ij}^{r+1}\mathbf{\gamma}_{ij} \right\|^2+\delta \right)^{-1/2}} \end{aligned} 
\right.$
\end{algorithmic}
\caption{Iteratively reweighted least squares (IRLS) algorithm for the LUD~(\ref{eq:LUD}) solver\label{alg:IRLS}}
\end{algorithm}
\section{Robust Pairwise Direction Estimation}
\label{sec:SubspaceEstim}
We now present a pairwise direction estimation method designed to maintain robustness to outlier point correspondences between image pairs.
\newline\indent Let $\{\mbox{I}_i\}_{i=1}^n$ be a collection of images of a stationary $3$D scene. We use a pinhole camera model, and denote the orientations, locations, and focal lengths of the $n$ cameras corresponding to these images by $\{R_i\}_{i=1}^n \subseteq \mbox{SO}(3)$, $\{\mathbf{t}_i\}_{i=1}^n \subseteq \R^3$, and $\{f_i\}_{i=1}^n \subseteq \R^+$, respectively. Consider a scene point $\mathbf{P} \in \R^3$ represented in the $i$'th image plane by $\mathbf{p}_i \in \R^3$. To produce $\mathbf {p}_i$, $\mathbf {P}$ is firstly represented in the $i$'th camera's coordinate system by $\mathbf {P}_i = R_i^T(\mathbf{P} - \mathbf{t}_i) = (\mathbf{P}_i^x,\mathbf{P}_i^y,\mathbf{P}_i^z)^T$ and then projected onto the $i$'th image plane by $\mathbf{p}_i = (f_i/P_i^z)\mathbf{P}_i$. Note that, for the image $\mbox{I}_i$, we in fact observe $\mathbf{q}_i = (\mathbf{p}_i^x,\mathbf{p}_i^y)^T \in \R^2$ (\ie, the coordinates on the image plane) as the measurement corresponding to $\mathbf{P}$.
\newline \indent For an image pair $\mbox{I}_i$ and $\mbox{I}_j$, the essential matrix $E_{ij} = [\mathbf{t}_{ij}]_{\times}R_{ij}$ (where $R_{ij} = R_i^TR_j$ and $\mathbf{t}_{ij} = R_i^T(\mathbf{t}_j-\mathbf{t}_i)$ denote the pairwise rotation and translation, and $[\mathbf{t}_{ij}]_{\times}$ is the matrix of cross product with $\mathbf{t}_{ij}$) satisfies the ``epipolar constraints'' given by
\begin{align}
\label{eq:EpipolarConst}
&\mathbf{p}_i^TE_{ij}\mathbf{p}_j = 0 \\
\label{eq:EpipolarConst2}
\iff & {\small \left[\begin{matrix} \mathbf{q}_i/f_i \\ 1\end{matrix}\right]^TE_{ij}\left[\begin{matrix} \mathbf{q}_j/f_j \\ 1\end{matrix}\right] = 0}
\end{align}
\newline\indent The estimates $\hat{R}_{ij}$ and $\hat{\mathbf{t}}_{ij}$, computed from the decomposition of $\hat{E}_{ij}$ (estimated via (\ref{eq:EpipolarConst2})), usually have large errors due to misidentified and/or small number of corresponding points. Hence, instead of using existing algorithms (\eg,~\cite{MicaAmitSfM,HartleyRotations, MartinecRotations}) to estimate the orientations $\hat{R}_i$ and then computing the pairwise direction estimates $\hat{\gamma}_{ij} =  \hat{R}_i\hat{\mathbf{t}}_{ij}/\|\hat{\mathbf{t}}_{ij}\|$, we take the following approach (a similar approach is used in~\cite{CvXSfM}): First, the rotation estimates $\hat{R}_i$ are computed using the iterative method in \S{4.1} of~\cite{CvXSfM} (using the robust algorithm of~\cite{GovinduRot} for each iteration), and we then use the epipolar constraints~(\ref{eq:EpipolarConst2}) to robustly estimate the pairwise directions. 
\newline\indent To that end, we rewrite the epipolar constraint~(\ref{eq:EpipolarConst2}) to emphasize its linearity in $\mathbf{t}_i$ and $\mathbf{t}_j$. Let $\{\mathbf{q}_{i}^{k}\}_{k=1}^{m_{ij}}$ and $\{\mathbf{q}_{j}^{k}\}_{k=1}^{m_{ij}}$ denote $m_{ij}$ corresponding feature points. Then, for $\mathbf{\eta}_i^k = \left[\begin{smallmatrix} \mathbf{q}_i^k/f_i \\ 1\end{smallmatrix}\right]$ and $\mathbf{\eta}_j^k = \left[\begin{smallmatrix} \mathbf{q}_j^k/f_j \\ 1\end{smallmatrix}\right]$, we can rewrite~(\ref{eq:EpipolarConst2}) as (also see~\cite{MicaAmitSfM,KneipL2Dirs,CvXSfM})
\begin{align}
\nonumber
&(\mathbf{\eta}_i^k)^TE_{ij}\mathbf{\eta}_j^k =  \left(R_i\mathbf{\eta}_i^k\times R_j\mathbf{\eta}_j^k \right)^T\left(\mathbf{t}_i -\mathbf{t}_j\right) = 0\\
\nonumber
\iff  &(\mathbf{\nu}_{ij}^k)^T\left(\mathbf{t}_i - \mathbf{t}_j\right) = 0, \ \ \mbox{for } \ \nu_{ij}^k \ \mbox{ denoting}  \\ 
\label{eq:EpipolarRewrite}
&\mathbf{\nu}_{ij}^k \defeq \mathbf{\Theta} \left(R_i\mathbf{\eta}_i^k \times R_j\mathbf{\eta}_j^k\right)
\end{align}
where, the normalization function $\mathbf{\Theta}$ is defined by $\mathbf{\Theta}(\mathbf{x}) = \mathbf{x}/\|\mathbf{x}\|$, $\mathbf{\Theta}(\mathbf{0}) = \mathbf{0}$. Then, in the noiseless case (assuming $m_{ij}\geq2$, and that we can find at least two $\nu_{ij}^k$'s not parallel to each other), $\{\mathbf{\nu}_{ij}^k\}_{k=1}^{m_{ij}}$ determine a $2$D subspace orthogonal to $\mathbf{t}_i - \mathbf{t}_j$, and hence the (undirected) ``line'' through $\mathbf{t}_i$ and $\mathbf{t}_j$ (\ie, $\mathbf{\gamma}_{ij}^0 = b_{ij}\mathbf{\gamma}_{ij}$, where the sign $b_{ij} \in \{-1,+1\}$ is unknown, but can be determined by using the fact that the $3$D scene points should lie in front of the cameras).
\newline\indent In the presence of noisy measurements, \ie if we replace $R_i$'s, $f_i$'s and $\mathbf{q}_i$'s with their estimates in~(\ref{eq:EpipolarRewrite}), we essentially obtain noisy samples $\hat{\mathbf{\nu}}_{ij}^k$'s from the $2$D subspace orthogonal to $\mathbf{t}_{i}-\mathbf{t}_{j}$. In order to maintain robustness to outliers among $\hat{\mathbf{\nu}}_{ij}^k$'s in the estimation of (undirected) lines $\mathbf{\gamma}_{ij}^0$, we first consider the following (non-convex) problem:
\begin{equation}
\label{eq:REAPER}
\begin{aligned}
\underset{{\scriptstyle \gamma_{ij}^0}}{\text{minimize}}
& \ \ \sum_{k=1}^{m_{ij}}  | (\gamma_{ij}^0)^T\hat{\mathbf{\nu}}_{ij}^k| \\
\text{subject to} & \ \ \|\gamma_{ij}^0\| = 1 \,
\end{aligned}
\end{equation}
In order to obtain the estimate $\hat{\mathbf{\gamma}}_{ij}^0$, we use a (heuristic) IRLS method for (\ref{eq:REAPER}). Here, although the program (\ref{eq:REAPER}) is not convex, and hence the IRLS method is not guaranteed to converge to global optima, we empirically observed this approach to produce high quality estimates for the lines $\mathbf{\gamma}_{ij}^0$, while preserving computational efficiency (for alternative methods, see~\cite{SSReaper,SSTylerM,SSGiannakis}). Lastly, the estimates $\hat{b}_{ij}$ of the signs of the direction estimates $\hat{\gamma}_{ij} = \hat{b}_{ij}\hat{\gamma}_{ij}^0$ are computed using the fact that the $3$D points should lie in front of the cameras.
\newline\indent In Figure~\ref{fig:DirEstimCompare}, we provide a comparison of our robust direction estimation method, with a PCA-based estimator (comprised of solving (\ref{eq:REAPER}) by replacing the cost function with the sum of squares version, \ie with $\sum_{k}  | (\gamma_{ij}^0)^T\hat{\mathbf{\nu}}_{ij}^k|^2$). The results imply that, the accuracy of the direction estimates can be significantly improved by our robust method. We also note that the running time of our robust method is comparable to that of PCA and hence does not significantly increase the overall running time of the entire pipeline.
\begin{figure}[!htbp]
\centering
   \includegraphics[trim=0.7cm 1cm 0.6cm 1.5cm, clip=true, width=1\linewidth]{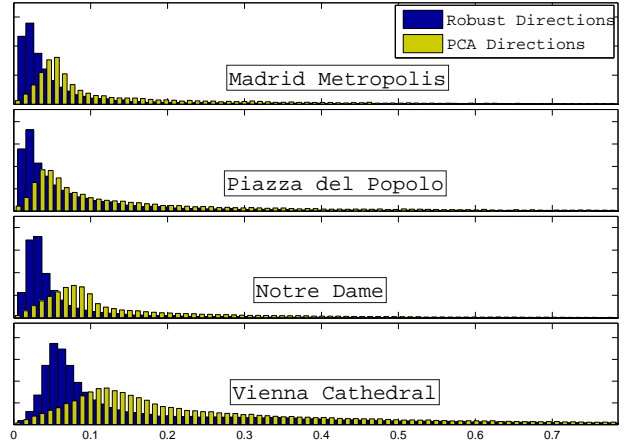}
\vspace{-0.08in}
   \caption{Histogram plots of the errors in direction estimates computed by our robust method (\ref{eq:REAPER}) and the PCA method, for some of the datasets (from~\cite{Snavely1D}) studied in \S\ref{sec:RealDataExp}. The errors represent the angles between the estimated directions and the corresponding ground truth directions (computed from a sequential SfM method based on~\cite{SnavelyData}, and provided in~\cite{Snavely1D}). We note that the errors take values in $[0,\pi]$, yet the histograms are restricted to $[0,\pi/4]$ to emphasize the difference of the quality in the estimated directions.\label{fig:DirEstimCompare}}
\end{figure}\\
\indent A summary of our camera motion estimation algorithm is given in Table~\ref{tab:AlgorithmSummary}.
\begin{table}[!htbp]
\centering
\resizebox{0.9\columnwidth}{!}{
\tabcolsep=0.1cm
\renewcommand{\arraystretch}{1.8}
\begin{tabular}{L{2cm}||L{9cm}}
\hline 
& Input: Images: $\{\mbox{I}_i\}_{i=1}^n$, \ Focal lengths: $\{f_i\}_{i=1}^n$ \vspace{0.035in}\\ \hline\hline
Feature Points, Essential Matrices, Camera Orientations & {\bf 1.} Find corresponding points between images (using SIFT~\cite{MicaAmit22}) \newline {\bf 2.} Compute $\hat{E}_{ij}$, using the eight-point algorithm~\cite{HartleyBook} (for pairs with sufficiently many correspondences) \newline {\bf 3.} Factorize $\hat{E}_{ij}$ to compute $\{\hat{R}_{ij}\}_{(i,j)\in E_{R}}$ and $G_R = (V_R,E_R)$ 
\newline {\bf 4.} Compute the orientation estimates $\hat{R}_{i}$ from $\{\hat{R}_{ij}\}_{(i,j)\in E_{R}}$ \newline \hspace{0.13in}(using the iterative approach of~\cite{CvXSfM} with the robust method \newline \hspace{0.13in}of~\cite{GovinduRot} for each iteration)
\vspace{0.035in} \\ \hline
Robust Pairwise Direction Estimation \S\ref{sec:SubspaceEstim}& {\bf 5.} Compute the $2$D subspace samples $\{\hat{\nu}_{ij}^k\}_{k=1}^{m_{ij}}$ for each \newline \hspace{0.13in}$(i,j)\in E_R$ (\ref{eq:EpipolarRewrite}) \newline {\bf 6.} Estimate the pairwise directions $\{\hat{\gamma}_{ij}\}_{(i,j)\in E_{R}}$ using (\ref{eq:REAPER})  \vspace{0.035in}\\ \hline
Location Estimation \S\ref{sec:TransEstim}& {\bf 7.} Extract the largest maximally parallel rigid component \newline \hspace{0.13in}$G_t = (V_t,E_t)$ of $G_R$ (see~\cite{MaximalRigid}) \newline {\bf 8.} Compute the location estimates $\{\hat{\mathbf{t}}_i\}_{i\in V_t}$ by \newline \hspace{0.13in}the LUD (\ref{eq:LUD}) method (using the IRLS Algorithm~\ref{alg:IRLS} or classical \newline \hspace{0.13in}interior point methods, \eg,~\cite{SDPT32}) \vspace{0.035in}\\ \hline\hline
 & Output: Camera orientations and translations: $\{\hat{R}_i, \hat{\mathbf{t}}_i\}$ \vspace{0.035in}\\
\hline
\end{tabular}}
\vspace{0.05in}\caption{Algorithm for camera motion estimation\label{tab:AlgorithmSummary}}
\end{table}
\section{Experiments} 
\label{sec:Experims}
\subsection{Synthetic Data Experiments}
\label{sec:SyntExp}
In this section we provide synthetic data experiments for the LUD formulation~(\ref{eq:LUD}). In particular, we provide evidence for exact location recovery from partially corrupted directions, and also compare the LUD solver to the CLS~\cite{TronVidalDist,TronVidalJournal}, the SDR~\cite{CvXSfM} and the LS~\cite{MicaAmitSfM,BATL2} methods.\\
\indent The measurement graphs $G_t = (V_t,E_t)$ of our experiments are random graphs drawn from the Erd\H{o}s-R\'{e}nyi model $\mathcal{G}(n,q)$, \ie each $(i,j)$ is in the edge set $E_t$ with probability $q$, independently of all other edges. In each experiment, we only record the results of problem instances defined on parallel rigid $G_t$. Given a set of locations $\{\mathbf{t}_i\}_{i=1}^n \subseteq \R^d$ and $G_t = (V_t,E_t)$, for each $(i,j)\in E_t$, we first let
\begin{equation}
\tilde{\gamma}_{ij} = \begin{cases} \mathbf{\gamma}_{ij}^U \ \ , & \mbox{w.p. } \ p \\ (\mathbf{t}_i-\mathbf{t}_j)/\|\mathbf{t}_i-\mathbf{t}_j\| + \sigma\mathbf{\gamma}_{ij}^G \, & \mbox{w.p.} \ 1-p \end{cases}
\label{eq:NoiseModel}
\end{equation}
and normalize $\tilde{\gamma}_{ij}$'s to obtain $\mathbf{\gamma}_{ij} = \tilde{\gamma}_{ij}/\|\tilde{\gamma}_{ij}\|$ as the direction measurement for the pair $(i,j)$. Here, $\{\mathbf{\gamma}_{ij}^U\}_{(i,j)\in E_t}$  and $\{\mathbf{\gamma}_{ij}^G\}_{(i,j)\in E_t}$ are i.i.d. random variables drawn from the uniform distribution on $S^{d-1}$ and the standard normal distribution on $\mathbb{R}^d$, respectively. Also, the original locations $\mathbf{t}_i$'s are i.i.d. random variables drawn from standard normal distribution on $\mathbb{R}^d$.\\
\indent We evaluate the performance in terms of the ``normalized root mean squared error'' (NRMSE) given by
\begin{equation}
\label{eq:NRMSE}
\mbox{NRMSE}(\{\hat{\mathbf{t}}_i\}) = \sqrt{\frac{\sum_i \|\hat{\mathbf{t}}_i - \mathbf{t}_i\|^2} {\sum_i \|\mathbf{t}_i - \mathbf{t}_0\|^2}}
\end{equation}
where $\hat{\mathbf{t}}_i$'s are the location estimates (after removal of the global scale and translation) and $\mathbf{t}_0$ is the center of $\mathbf{t}_i$'s. \\
\indent The first set of experiments demonstrates the recovery performance of the LUD solver in the presence of partially corrupted directions, by setting $\sigma = 0$ in~(\ref{eq:NoiseModel}), and by controlling the proportion of outlier measurements via the parameter $p$. The results are summarized in Figure~\ref{fig:ExactRecoveryTest}, where for each experiment the intensity of each pixel represents $\log_{10}(\mbox{NRMSE})$ (NRMSE values are averaged over $10$ random realizations). These results demonstrate a striking feature of the LUD solver: In the presence of partially corrupted directions (with sufficiently small, but non-zero, proportion of corrupted directions), the LUD solver {\em recovers the original locations exactly} (\ie, we get $\mbox{NRMSE} < \epsilon_{\mbox{\scriptsize IRLS}}$, where $\epsilon_{\mbox{\scriptsize IRLS}}$ is the convergence tolerance for the IRLS algorithm, set to $\epsilon_{\mbox{\scriptsize IRLS}} = 1\mbox{e-}8$ in our experiments). In Figure~\ref{fig:ExactRecoveryTest}, we observe that, the exact recovery performance for $d = 3$ is  improved as compared to the $d = 2$ case. Additionally, the transition to the exact recovery region becomes slightly sharper, and exact recovery performance for small values of outlier probability $p$ is marginally improved when enlarging $n$ from $100$ to $200$.
\begin{figure}[!htbp]
\begin{center}
\includegraphics[trim=2.3cm 7.8cm 0.8cm 1.5cm, clip=true, width=0.85\linewidth]{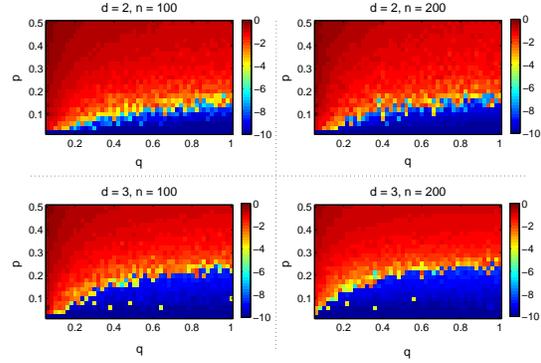}
\end{center}\vspace{-0.2in}
 \caption{NRMSE~(\ref{eq:NRMSE}) results of the LUD~(\ref{eq:LUD}) solver for the exact recovery experiments. The color intensity of each pixel represents $\log_{10}(\mbox{NRMSE})$, depending on the edge probability $q$ ($x$-axis), and the outlier probability $p$ ($y$-axis). Measurements are generated by the noise model~$(\ref{eq:NoiseModel})$, assuming $\sigma = 0$, and NRMSE values are averaged over $10$ trials.\label{fig:ExactRecoveryTest}}
\end{figure}\\
\indent The second set of experiments, depicted in Figure~\ref{fig:SyntheticNRMSEsn200}, presents a comparative evaluation of the NRMSE of the LUD, the CLS, the SDR and the LS solvers, for $d=3$ (we observed similar performance for $d=2$). The outcomes clearly present the robustness of the LUD formulation in the presence of outliers (up to a significant proportion of outliers, depending on $q$ and $n$), while the recovery performance of the other methods is degraded significantly. Even if the measurement noise is dominated by small errors in the inlier directions (\ie, when $\sigma$ is relatively large compared to $p$), the LUD solver continues to outperform the other methods, in almost all cases.
\begin{figure}[!htbp]
\begin{center}\vspace{-0.05in}
   \includegraphics[trim=1.8cm 1cm 1.8cm 0.5cm, clip=true, width=0.85\linewidth]{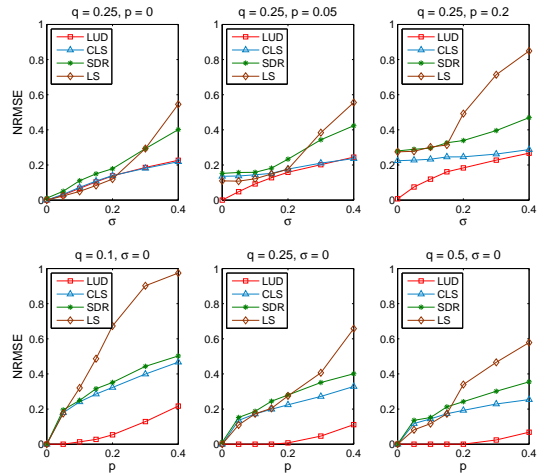}
\end{center}\vspace{-0.12in}
   \caption{NRMSE~$(\ref{eq:NRMSE})$ performance of the LUD~(\ref{eq:LUD}) formulation vs. the CLS~\cite{TronVidalDist,TronVidalJournal}, the SDR~\cite{CvXSfM} and the LS~\cite{MicaAmitSfM,BATL2} solvers, for $n = 200$ locations. Measurements are generated by the noise model~$(\ref{eq:NoiseModel})$ and NRMSE values are averaged over $10$ trials.\label{fig:SyntheticNRMSEsn200}}
\end{figure}
\begin{table*}[!htbp]
\centering \hspace{-0.1in}
\resizebox{2.1\columnwidth}{!}{
\renewcommand{\arraystretch}{1.05}
\tabcolsep=0.1cm
\begin{tabular}{|l|c|c||c|c|c|c|c|c|c||c|c|c|c|c||c|c|c|c|c||c|c|c|c||c|c|}
\hline
\multicolumn{3}{|c||}{Dataset} & \multicolumn{7}{c||}{LUD} & \multicolumn{5}{c||}{CLS~\cite{TronVidalDist,TronVidalJournal}} & \multicolumn{5}{c||}{SDR~\cite{CvXSfM}} & \multicolumn{4}{c||}{1DSfM~\cite{Snavely1D}} & \multicolumn{2}{c|}{\cite{GovinduEarlyL2}}\\ \hline
\multirow{3}{*}{Name} & \multicolumn{2}{c||}{Size} & \multicolumn{4}{c|}{Initial} & \multicolumn{3}{c||}{After BA} & \multicolumn{2}{c|}{Initial} & \multicolumn{3}{c||}{After BA} & \multicolumn{2}{c|}{Initial} & \multicolumn{3}{c||}{After BA} & \multicolumn{1}{c|}{Init.} & \multicolumn{3}{c||}{After BA} & \multicolumn{2}{c|}{After BA} \\ &\multicolumn{2}{c||}{}& \multicolumn{2}{c|}{PCA} &\multicolumn{2}{c|}{Robust}&\multicolumn{3}{c||}{Robust}&\multicolumn{2}{c|}{Robust}&\multicolumn{3}{c||}{Robust}&\multicolumn{2}{c|}{Robust}& \multicolumn{3}{c||}{Robust} & &\multicolumn{3}{c||}{} & \multicolumn{2}{c|}{} \\ 
& $m$ & $N_c$ & {\large$\tilde{e}$} & {\large $\hat{e}$} & {\large$\tilde{e}$} & {\large $\hat{e}$} & $N_c$ & {\large $\tilde{e}$} & {\large$\hat{e}$} & {\large$\tilde{e}$} & {\large$\hat{e}$} & $N_c$ & {\large$\tilde{e}$} & {\large$\hat{e}$} & {\large$\tilde{e}$} & {\large$\hat{e}$} & $N_c$ & {\large$\tilde{e}$} & {\large$\hat{e}$} & {\large$\tilde{e}$} & $N_c$ & {\large$\tilde{e}$} & {\large$\hat{e}$} & $N_c$ & {\large$\tilde{e}$}\\ \hline
Piazza del Popolo & $60$ & $328$ & $3.0$ & $7$ & $\mathbf{1.5}$ & $\mathbf{5}$ & $305$ & $\mathbf{1.0}$ & $\mathbf{4}$ & $3.5$ & $6$ & $305$ & $1.4$ & $5$ & $1.9$ & $8$ & $305$ & $1.3$ & $7$ & $3.1$ & $308$ & $2.2$ & $200$ & $93$ & $16$ \\ 
NYC Library & $130$ & $332$ & $4.9$ & $9$ & $\mathbf{2.0}$ & $\mathbf{6}$ & 320 & $1.4$ & $7$ & $5.0$ & $8$ & $320$ & $3.9$ & $8$ & $5.0$ & $8$ & $320$ & $4.6$ & $8$ & $2.5$ & $295$ & $\mathbf{0.4}$ & $\mathbf{1}$ & $271$ & $1.4$ \\ 
Metropolis & $200$ & $341$ & $4.3$ & $8$ & $\mathbf{1.6}$ & $\mathbf{4}$ & $288$ & $1.5$ & $\mathbf{4}$ & $6.4$ & $10$ & $288$ & $3.1$ & $7$ & $4.2$ & $8$ & $288$ & $3.1$ & $7$ & $9.9$ & $291$ & $\mathbf{0.5}$ & $70$ & $240$ & $18$ \\ 
Yorkminster & $150$ & $437$ & $5.4$ & $10$ & $\mathbf{2.7}$ & $\mathbf{5}$ & $404$ & $1.3$ & $\mathbf{4}$ & $6.2$ & $9$ & $404$ & $2.9$ & $8$ & $5.0$ & $10$ & $404$ & $4.0$ & $10$ & $3.4$ & $401$ & $\mathbf{0.1}$ & $500$ & $345$ & $6.7$ \\ 
Tower of London & $300$ & $572$ & $12$ & $25$ & $\mathbf{4.7}$ & $\mathbf{20}$ & $425$ & $3.3$ & $\mathbf{10}$ & $16$ & $30$ & $425$ & $15$ & $30$ & $20$ & $30$ & $425$ & $17$ & $30$ & $11$ & $414$ & $\mathbf{1.0}$ & $40$ & $306$ & $44$ \\ 
Montreal N. D. & $30$ & $450$ & $1.4$ & $2$ & $\mathbf{0.5}$ & $\mathbf{1}$ & $435$ & $\mathbf{0.4}$ & $\mathbf{1}$ & $1.1$ & $2$ & $435$ & $0.5$ & $\mathbf{1}$ & $-$ & $-$ & $-$ & $-$ & $-$ & $2.5$ & $427$ & $\mathbf{0.4}$ & $\mathbf{1}$ & $357$ & $9.8$\\ 
Notre Dame & $300$ & $553$ & $1.1$ & $2$ & $\mathbf{0.3}$ & $\mathbf{0.8}$ & $536$ & $\mathbf{0.2}$ & $\mathbf{0.7}$ & $0.8$ & $2$ & $536$ & $0.3$ & $0.9$ & $-$ & $-$ &$-$ & $-$ & $-$ & $10$ & $507$ & $1.9$ & $7$ & $473$ & $2.1$ \\ 
Alamo & $70$ & $577$ & $1.5$ & $3$ & $\mathbf{0.4}$ & $\mathbf{2}$ & $547$ & $\mathbf{0.3}$ & $\mathbf{2}$ & $1.3$ & $3$& $547$ & $0.6$ & $\mathbf{2}$ & $-$ & $-$ & $-$ & $-$ & $-$ & $1.1$ & $529$ & $\mathbf{0.3}$ & $2$e$7$ & $422$ & $2.4$\\ 
Vienna Cathedral & $120$ & $836$ & $7.2$ & $12$ & $\mathbf{5.4}$ & $\mathbf{10}$ & $750$ & $4.4$ & $\mathbf{10}$ & $8.8$ & $\mathbf{10}$ & $750$ & $8.2$ & $\mathbf{10}$ & $-$ & $-$ & $-$ & $-$ & $-$& $6.6$ & $770$ & $\mathbf{0.4}$ & $2$e$4$ & $652$ & $12$ \\ \hline
\end{tabular}
}\vspace{0.05in}
\caption{Performance comparison of various methods for datasets from~\cite{Snavely1D}: Units are (approximately) in meters. $N_c$ denotes number of estimated camera locations, $\hat{e}$ denotes the average distance, and $\tilde{e}$ denotes the median distance of the estimated camera locations to the corresponding cameras in the reference solution (computed using~\cite{SnavelyData}, and provided in~\cite{Snavely1D}). `PCA' and `Robust' refers to the pairwise direction estimation method used (\cf (\ref{eq:REAPER}) and Figure~\ref{fig:DirEstimCompare}).\label{tab:RealDataExpErrs}}
\end{table*}
\subsection{Real Data Experiments}
\label{sec:RealDataExp}
We tested our location estimation algorithm on nine sets of real images from~\cite{Snavely1D}. These are relatively irregular collections of images and hence estimating the camera locations for all of these images (or a large subset) is challenging. To solve the LUD problem~(\ref{eq:LUD}), we use the IRLS algorithm~\ref{alg:IRLS}, and to construct a $3$D structure in our experiments, we use the parallel bundle adjustment (PBA) algorithm of~\cite{PBA}. We perform our computations on workstations with Intel(R) Xeon(R) X$7542$ CPUs, each with $6$ cores, running at $2.67$ GHz. In order to directly compare the accuracy of the location estimation by LUD to that of CLS~\cite{TronVidalDist,TronVidalJournal} and SDR~\cite{CvXSfM} solvers, we use the same direction estimates (\cf Table~\ref{tab:AlgorithmSummary}) for each method (except for the case where the PCA directions are used for the LUD solver, \cf columns $4$ and $5$ of Table~\ref{tab:RealDataExpErrs}). These estimates produced more accurate location estimates for all data sets. We note that, the computation of the robust direction estimates is performed in parallel (using $10$ cores for each dataset). Similar to~\cite{Snavely1D}, for performance evaluation, we consider the camera location estimates computed by a sequential SfM solver based on Bundler~\cite{SnavelyData} (and provided in~\cite{Snavely1D}) as the ground truth, and use a RANSAC-based method to compute the global transformation between our estimates and the ground truth.\\
\indent We provide the accuracy comparisons in Table~\ref{tab:RealDataExpErrs}: The results are given in terms of the average distance $\hat{e}$, and the median distance $\tilde{e}$ of the estimated camera locations to the corresponding cameras in the reference solution (units are approximately in meters). The results of~\cite{Snavely1D} correspond to the estimates computed by the combination of an outlier direction detection method (termed ``1DSfM'' in~\cite{Snavely1D}) and a location estimation method employing a robust cost function. The results of~\cite{GovinduEarlyL2} are cited from~\cite{Snavely1D}. Also, the results of the SDR method~\cite{CvXSfM} correspond to the estimates computed by applying the solver to the whole measurement graphs, and hence are not provided for the relatively larger datasets due to computational limitations. We also provide the running times corresponding to each experiment in Table~\ref{tab:RealDataExpTimes} (note that the bundle adjustment times $T_{BA}$ for the LUD, the CLS and the SDR solvers are computed after an initial $3$D structure is provided). The comparison of the accuracy of the LUD solver given the robust directions (\cf\S\ref{sec:SubspaceEstim}) to the case of the PCA directions, and the comparison of the LUD solver to the CLS, the SDR and~\cite{Snavely1D} imply that, the combination of our robust direction estimation method and the LUD solver produces highly accurate initial estimates, with a computation cost that is slightly higher than the CLS method and~\cite{Snavely1D}. Using the initial estimates, we apply PBA once, to obtain rich $3$D structures and further improvements in accuracy. See Figure~\ref{fig:3DNewDataSets} for some of the $3$D structures obtained from the initial LUD estimates.

\begin{table*}[!htbp]
\centering \hspace{-0.1in}
\resizebox{2.1\columnwidth}{!}{
\renewcommand{\arraystretch}{1.05}
\tabcolsep=0.1cm
\begin{tabular}{|l|c|c|c||c|c|c||c|c|c||c|c|c||c|c|c|c|c||c||c|}
\hline
\multicolumn{4}{|c||}{} & \multicolumn{3}{c||}{LUD} & \multicolumn{3}{c||}{CLS~\cite{TronVidalDist,TronVidalJournal}} & \multicolumn{3}{c||}{SDR~\cite{CvXSfM}} & \multicolumn{5}{c||}{1DSfM~\cite{Snavely1D}} & \cite{GovinduEarlyL2} & \cite{SnavelyData}\\ 
Dataset & {\large $T_{R}$} & {\large $T_G$} & {\large$T_{\gamma}$} & {\large$T_{\mathbf{t}}$} & {\large$T_{BA}$} & {\large$T_{tot}$} & {\large$T_{\mathbf{t}}$} & {\large$T_{BA}$} & {\large$T_{tot}$} & {\large$T_{\mathbf{t}}$} & {\large$T_{BA}$} & {\large$T_{tot}$} & {\large$T_{R}$} & {\large$T_{\gamma}$} & {\large$T_{\mathbf{t}}$} & {\large$T_{BA}$} & {\large$T_{tot}$} & {\large$T_{tot}$} & {\large$T_{tot}$} \\ \hline
Piazza del Popolo & $35$ & $43$ & $18$ & $35$ & $31$ & $162$ & $9$ & $106$ & $211$ & $358$ & $39$ & $493$ & $14$ & $9$ & $35$ & $191$ & $249$ & $138$ & $1287$ \\ 
NYC Library & $27$ & $44$ & $18$ & $57$ & $54$ & $200$ & $7$ & $47$ & $143$ & $462$ & $52$ & $603$ & $9$ & $13$ & $54$ & $392$ & $468$ & $220$ & $3807$ \\ 
Metropolis & $27$ & $37$ & $13$ & $27$ & $38$ & $142$ & $6$ & $23$ & $106$ & $181$ & $45$ & $303$ & $15$ & $8$ & $20$ & $201$ & $244$ & $139$ & $1315$ \\ 
Yorkminster & $19$ & $46$ & $33$ & $51$ & $148$ & $297$ & $10$ & $133$ & $241$ & $648$ & $75$ & $821$ & $11$ & $18$ & $93$ & $777$ & $899$ & $394$ & $3225$ \\ 
Tower of London & $24$ & $54$ & $23$ & $41$ & $86$ & $228$ & $8$ & $202$ & $311$ & $352$ & $170$ & $623$ & $9$ & $14$ & $55$ & $606$ & $648$ & $264$ & $1900$ \\ 
Montreal N. D. & $68$ & $115$ & $91$ & $112$ & $167$ & $553$ & $21$ & $270$ & $565$ & $-$ & $-$ & $-$ & $17$ & $22$ & $75$ & $1135$ & $1249$ & $424$ & $2710$\\ 
Notre Dame & $135$ & $214$ & $325$ & $247$ & $126$ & $1047$ & $52$ & $504$ & $1230$ & $-$ & $-$ & $-$ & $53$ & $42$ & $59$ & $1445$ & $1599$ & $1193$ & $6154$ \\ 
Alamo & $103$ & $232$ & $96$ & $186$ & $133$ & $750$ & $40$ & $339$ & $810$ & $-$ & $-$ & $-$ & $56$ & $29$ & $73$ & $752$ & $910$ & $1403$ & $1654$\\ 
Vienna Cathedral & $267$ & $472$ & $265$ & $255$ & $208$ & $1467$ & $46$ & $182$ & $1232$ & $-$ & $-$ & $-$ & $98$ & $60$ & $144$ & $2837$ & $3139$ & $2273$ & $10276$ \\ \hline
\end{tabular}
}\vspace{0.05in}
\caption{Running times, in seconds, for the experiments in Table~\ref{tab:RealDataExpErrs}: times for orientation estimation ($T_{R}$), extraction of largest maximally parallel rigid component ($T_G$), robust pairwise direction estimation ($T_{\gamma}$), translation estimation ($T_{\mathbf{t}}$), bundle adjustment ($T_{BA}$), and total time ($T_{tot}$). (For the LUD, the CLS and the SDR solvers, the bundle adjustment times $T_{BA}$ are computed after an initial $3$D structure is provided, and the first three columns, \ie, $T_{R}$,$T_G$,$T_{\gamma}$, are common).\label{tab:RealDataExpTimes}}
\end{table*}

\begin{figure*}
\begin{center}
   \includegraphics[trim=0cm 0.25cm 0cm 0.1cm, clip=true, width=0.95\linewidth]{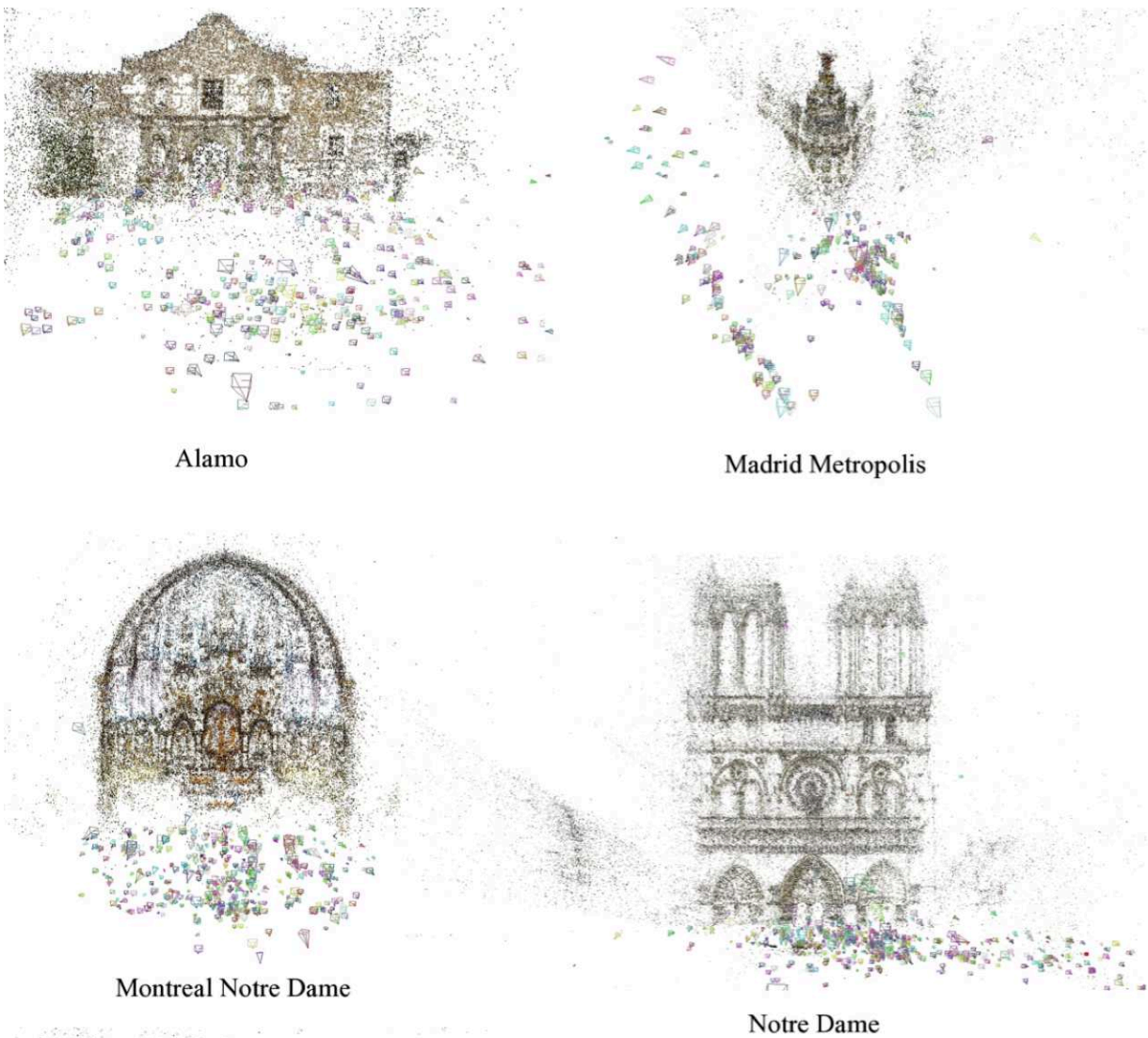}
\end{center}\vspace{-0.1in}
   \caption{Snapshots of selected $3$D structures computed using the camera location estimates of the LUD solver~(\ref{eq:LUD}) (without bundle adjustment). Each $3$D point is visible through at least three cameras.\label{fig:3DNewDataSets}}
\end{figure*}

\section{Conclusion and Future Work}
\label{sec:Conclusion}
We provided a complete characterization of well-posed instances of the camera location estimation problem, via the existing theory of parallel rigidity, and used it in practice to extract maximal image subsets for which estimation of camera location is well posed. For robust estimation of camera locations, we introduced a pairwise direction estimation method to maintain robustness to outliers in point correspondences, and we also presented a robust convex program, namely ``the least unsquared deviations'' (LUD) solver, to diminish the effects of outliers in pairwise direction measurements. We empirically demonstrated that unlike other estimators, the LUD formulation allows exact recovery of locations in the presence of partially corrupted direction measurements. In the context of structure from motion, our formulations can be used to efficiently and robustly estimate camera locations, in order to produce a high-quality initialization for reprojection error minimization algorithms, as demonstrated by our experiments on real image sets. 
\newline \indent As future work, we plan to further investigate the phenomenon of exact recovery with partially corrupted directions, to characterize the conditions for its existence.

\section*{Acknowledgements} The authors wish to thank Ronen Basri and Yuehaw Khoo for many valuable discussions related to this work. 
\newline \indent The authors were partially supported by  Award Number FA9550-12-1-0317 and FA9550-13-1-0076 from AFOSR, and by Award Number LTR DTD 06-05-2012 from the Simons Foundation.
{\small
\bibliographystyle{ieee}
\bibliography{SfMbib}
}

\end{document}